\newtheorem{lemma}{Lemma}
\newtheorem{definition}{Definition}
\newtheorem{assumption}{Assumption}
\newtheorem{proposition}{Proposition}
\def\begmat#1{\begin{bmatrix}#1\end{bmatrix}}
\def\cale{{\cal E}}
\def\cali{{\cal I}}
\def\calh{{\cal H}}
\def\calt{{\cal T}}
\def\calb{{\cal B}}
\def\cals{{\cal S}}
\def\cale{{\cal E}}
\def\call{{\cal L}}
\def\calv{{\cal V}}
\def\pa{\mathbb{P}_a}
\def\L2e{{\cal L}_{2e}}
\def\rea{\mathbb{R}}
\def\adj{\mbox{adj}}
\def\col{\mbox{col}}
\def\hal{{1 \over 2}}
\def\et{\epsilon_t}
\def\max{{\mbox{max}}}
\def\call{{\cal L}}
\def\calb{{\cal B}}
\def\cale{{\cal E}}
\def\caln{{\cal N}}
\def\calt{{\cal T}}
\def\hal{{1 \over 2}}
\def\col{\mbox{col}}
\def\L2{{\cal L}_2}
\def\L2e{{\cal L}_{2e}}
\def\rea{\mathbb{R}}
\def\begequarr{\begin{eqnarray}}
\def\endequarr{\end{eqnarray}}
\def\begequarrs{\begin{eqnarray*}}
\def\endequarrs{\end{eqnarray*}}
\def\begarr{\begin{array}}
\def\endarr{\end{array}}
\def\begequ{\begin{equation}}
\def\endequ{\end{equation}}
\def\begdes{\begin{description}}
\def\enddes{\end{description}}
\def\begenu{\begin{enumerate}}
\def\begite{\begin{itemize}}
\def\endite{\end{itemize}}
\def\endenu{\end{enumerate}}
\def\lef[{\left[\begin{array}}
\def\rig]{\end{array}\right]}
\def\begcen{\begin{center}}
\def\endcen{\end{center}}
\def\ki{k_{\tt I}}
\def\tr{\mbox{tr}}
\def\qed{\hfill $\triangleleft$}
\def\TAC{{\it IEEE Trans. on Automatic Control}}
\def\SCL{{\it Systems \& Control Letters}}
\def\AUT{{\it Automatica}}
\def\ijrr{{\it Int. J. of Robotics Research}}
\def\TRO{{\it IEEE Trans. on Robotics}}
\def\begmat#1{\begin{bmatrix}#1\end{bmatrix}}
\title{\LARGE \bf
An almost globally convergent observer for visual SLAM without persistent excitation
}
\author{
Bowen Yi, Chi Jin, Lei Wang, Guodong Shi and Ian R. Manchester
\thanks{This paper is supported by the Australian Research Council. 
}
\thanks{B. Yi, L. Wang, G. Shi and I.R. Manchester are with Australian Centre for Field Robotics \& Sydney Institute for Robotics and Intelligent Systems, The University of Sydney, Sydney, NSW 2006, Australia. C. Jin is with DJI Innovation Inc., Shenzhen 518057, China. ({\tt\small bowen.yi@sydney.edu.au})}%
}
\begin{document}

\maketitle
\thispagestyle{empty}
\pagestyle{empty}

\begin{abstract}
In this paper we propose a novel observer to solve the problem of visual simultaneous localization and mapping (SLAM), only using the information from a single monocular camera and an inertial measurement unit (IMU). The system state evolves on the manifold $SE(3)\times \rea^{3n}$, on which we design dynamic extensions carefully in order to generate an invariant foliation, such that the problem is reformulated into online \emph{constant parameter} identification. Then, following the recently introduced parameter estimation-based observer (PEBO) and the dynamic regressor extension and mixing (DREM) procedure, we provide a new simple solution. A notable merit is that the proposed observer guarantees almost global asymptotic stability requiring neither persistency of excitation nor uniform complete observability, which, however, are widely adopted in most existing works with guaranteed stability.%
\end{abstract}

%

%
\section{Introduction}
\label{sec1}
%

Simultaneous localization and mapping (SLAM) is a fundamental problem widely studied in the robotics community, as well as in the field of navigation \cite{HUADIS,LOUetal}. In SLAM, two main aims are concerned to be accomplished concurrently---mapping an unknown environment, and online estimating the pose, {\em i.e.} attitude and position, of a mobile robot, thus making SLAM an important part for unmanned systems in the absence of absolute positioning systems.

The main approaches to address this problem may generally be classified into two categories. The first one is within the probabilistic and optimization framework, assuming Gaussian noises and processes, and then formulating the problem as maximum likelihood estimation, which generally has nonlinear least squares solutions, {\em e.g.} GraphSLAM \cite{THRMON} and the SLAM++ framework \cite{ILAetal}. An alternative is to obtain estimation from an asymptotic convergence viewpoint---known as filtering---by using recursive algorithms. It includes extended Kalman filter (EKF)-SLAM and many modern algorithms, {\em e.g.} FastSLAM, which combines EKF and particle filtering \cite{MONTHR}. An essential part of these algorithms is their convergence and consistency analysis, the success of which relies on first-order approximation of systems dynamics \cite{HUADIS}. It sometimes yields satisfactory performance, but may have inconsistency issues when starting from a bad initial guess, in particular for large applications, which is caused by small domains of attraction in terms of linearization---invoking high nonlinearity of the associated dynamics. In the last few years, the nonlinear control community shows great interests to SLAM, providing alternatives via nonlinear observer design to address the problem. Indeed, SLAM can be regarded as the problem of state observation of a nonlinear system living on the manifold $SE(3)\times \rea^{3n}$. On the other hand, nonlinear observer for systems on manifolds is a well established topic, with special emphasis to matrix Lie groups \cite{IZASAN,LAGetal,MAHetal}. Some very recent papers \cite{LOUetal,TANetal,GUEetal} give several solutions to the \emph{robo-centric} SLAM problem, {\em i.e.}, estimating landmark coordinates in the body-fixed frame, for which the dynamics can be transformed into linear time-varying (LTV) systems, thus avoiding the approximation error from linearization. Then, the Kalman-Bucy filter is applicable to provide \emph{globally} convergent estimation, if the robot movement guarantees uniform complete observability (UCO). It is well known that the UCO of an LTV system is equivalent to a persistency of excitation (PE) condition \cite{SASBOD}.

In this paper, we propose a new observer-based solution to visual SLAM, a case with only bearing measurement of landmarks available. Similar problems were recently studied in \cite{VANetal}, in which the authors introduce a constructive observer design to visual SLAM by lifting to a new symmetry Lie group $\mathbb{VSLAM}(3)$ in order to make the output function equivariant. Since the system is not strongly differentially observable, in order to be able to achieve asymptotic stability, some PE conditions are required for the robot trajectory. Besides \cite{LOUetal,VANetal}, some PE or UCO-type assumptions are also indispensable in some related problems, {\em e.g.}, locolization using range or direction measurements \cite{HAMSAM}, velocity estimation using normalized measurement \cite{BJOetal}, and the non-stationary Perspective-$n$-Point (PnP) problem \cite{HAMSAMtac}. Intuitively, these assumptions impose relative motion between the robot and the landmarks, the uniformity of which should hold w.r.t. time. However, such assumptions may not be satisfied in many scenarios, such as, robots stopping in specific tasks, and landmarks appearing in the field of camera only during a finite interval, which validates neither the PE nor UCO conditions. Under these circumstances, the estimates of existing SLAM observers cannot converge to their values; and the observers may even diverge in the presence of measurement noise. Overcoming these problems are one of the motivations of the paper. Our main contributions are 
\begin{itemize}
    \item[\bf C1] Showing that visual SLAM observer design can be translated into online parameter estimation, and then solved by the recently introduced parameter estimation-based observer (PEBO) \cite{ORTetalscl}, guaranteeing invariance and almost global convergence, in contrast to the locality in EKF-SLAM methods;
    \item[\bf C2] Providing a simple visual SLAM observer, which is robust {\em vis-\`a-vis} measurement noise, and enjoys low computation burden;
    \item[\bf C3] Removing the PE condition required in some recent results, {\em e.g.}, \cite{BJOetal,LOUetal,VANetal,TANetal,HAMSAMtac}, the practical importance of which can hardly be overestimated, since it is usually the case in many tasks.
\end{itemize}

The remainder of the paper is organized as follows. In Section \ref{sec2} we will present some preliminaries, notations and the kinematic model used in the paper, as well as giving the mathematical problem formulation. In Section \ref{sec3}, a novel visual SLAM observer will be designed. It will be followed by some simulation results in Section \ref{sec4}. Finally, the paper is wrapped up by a brief concluding remark. A version with full details of the proofs can be found in \cite{YIcdc}.
%
\section{\rm\textsc{Problem Formulation and Preliminaries}}
\label{sec2}
%
%

\subsection{Notations}

We use $SO(3)$ to represent the special orthogonal group, and ${\mathfrak {so}}(3)$ is the associated Lie algebra as the set of skew-symmetric matrices satisfying $SO(3)=\{R\in \rea^{3\times3}|R^\top R = I_3, ~ \det(R) =1\}$. The unit sphere is denoted as $\cals^2=\{x\in\rea^3|~|x|=1\}$. Given $a \in \rea^3$, we define the operator $(\cdot)_\times$ as 
$$
a_\times := \begmat{ 0 & - a_3 & a_2 \\  a_3 & 0 & -a_1 \\ -a_2 & a_1 & 0 } \in {\mathfrak {so}}(3) .
$$
We also consider the special Euclidean group denoted as $SE(3) = \{ \calt(R,x)\in \rea^{4\times 4}|R\in SO(3),~ x\in \rea^3\}$ with
\begequ
\label{calt}
\calt(R,x) = \begmat{R & x \\ 0 & 1}.
\endequ
The Lie algebra of $SE(3)$ is defined as 
$$
{\mathfrak{se}}(3):= \left\{ A \in \rea^{4\times 4} \Bigg|A= \begmat{\Omega_\times & v\\ 0 & 0}, \Omega_\times\in \mathfrak{so}(3), v \in \rea^3 \right\}.
$$
For any $x\in \rea^3$, $a\times x$ is the vector cross product, satisfying $a_\times x = a \times x$. We define a wedged mapping 
$
U^\vee = \begmat{\Omega_\times & v \\ 0 & 0}
$
for a vector $U:=\col(\Omega,v)\in \rea^6$. Given $A \in \rea^{n\times n}$ and $S\in \rea^{n\times n}_{\succeq 0}$, the Frobenius norm is defined as $\|A\| = \sqrt{\tr(A^\top A)}$, and $S^\hal$ and $\adj\{A\}$ represent the matrix square root and the adjugate matrix, respectively. We use $|\cdot|$ to denote Euclidean norms of vectors or its induced matrix norm. For any $x\in \rea^3/\{0\}$, its projector is defined as
$
\Pi_x :=I_3 -{1\over|x|^2}xx^\top,
$
which projects a given vector onto the subspace orthogonal of $x$. $\et$ represents exponentially decaying terms with proper dimensions. When clear from context, the arguments and subscripts are omitted. Before closing this subsection, let us recall below the notations of PE and interval excitation (IE). Obviously, IE is significantly weaker than PE, not requiring uniformity in time.
\begin{definition}\label{def1}\rm
Given a bounded signal $\phi:\rea_+ \to \rea^n$, it is
\begin{itemize}
  \item[-] $(T,\delta)$-PE, if
$
     \int_{t}^{t+T} \phi(s)\phi^\top(s) ds \succeq \delta I_n
$
    for some $T>0, \delta >0$ and all $t\ge0$.
%
%
    \item[-] $(t_0,t_c,\delta)$-IE if there exist $t_0 \ge 0$ and $t_c \ge 0$ such that
    $
    \int_{t_0}^{t_0+t_c} \phi(s)\phi^\top(s) ds \succeq \delta I_n
    $
    for some $\delta >0$. \qed

\end{itemize}
\end{definition}

\begin{table}
\begin{tcolorbox}[
colback=white!10,
coltitle=blue!20!black,  
]
\begin{center}
 {\bf Nomenclature}
\end{center}
\vspace{0.2cm}
  \renewcommand\arraystretch{1.4}
\small
\begin{tabular}{ll}
$\hat{(\cdot)}$ & Estimate of a variable or state \\
$\tilde{(\cdot)}$ & Estimation error\\
$I_{n}$ & {${n\times n}$ identity matrix} \\
$x \in \rea^3$ & Robot position \\
$v \in \rea^3$ & Translational velocity in $\{\calb\}$ \\
$\Omega \in \rea^3$ & Rotational velocity in $\{\calb\}$\\
$R \in SO(3)$ & Robot attitude matrix \\
$X \in SE(3)$ & Rigid-body pose $X:=\calt(R,x)$\\
$z_i \in \rea^3$ & Position of the $i$-th landmark in $\{\cali\}$\\
$y_i\in \rea^3$ & Bearing vector of the $i$-th landmark in $\{\calb\}$ \\
$\pa(\cdot)$ & Skew-symmetric projector $\pa(A) = {A-A^\top \over 2}$\\
$\small\{\cali,\calb,\calv,\cale\}$ & Inertial, body, virtual, and estimate frames
\end{tabular}
\end{tcolorbox}
\end{table}

\subsection{Kinematic model and visual SLAM problem}

The kinematics of a robot with rigid body is given by
\begin{equation}
\label{kinematics}
\begin{aligned}
	\dot x  = Rv, \quad 
	\dot R  = R \Omega_\times.
\end{aligned}
\end{equation}  
All the definitions of symbols and the spaces where they live in can be found in Nomenclature. We assume that there are $n$ landmarks appearing in the field view of camera, the coordinates $z_i$ of which in $\{\cali\}$ are constant, thus satisfying
\begin{equation}
\label{dyn:ldmk}
\dot z_i = 0, \quad i\in \caln :=\{1,\ldots, n\} \subset \mathbb{N}.
\end{equation}
We assume that $v$ and $\Omega$ are uniformly bounded for $t\in[0,+\infty)$, and the kinematic model is forward complete. The time derivative of $X:=\calt(R,x)$ is 
\begin{equation}
\label{dot_X}
\dot X = X U^\vee
\end{equation}
with $U=\col(\Omega,v)$ containing rotational and translational velocities. From some simple geometry identities, the point-type landmarks in $\{\calb\}$ verify
$
x^{\calb} = R^\top (z_i -x ).
$
In visual SLAM only monocular cameras and IMUs are equipped on robotics, thus only landmark bearings being measurable. Without loss of generality, we assume that the camera frame coincides with $\{\calb\}$. Then, the output is exactly the bearing
\begin{equation}
\label{output}
y_i = h_i(X,Z) = R^\top {z_i - x \over |z_i -x|} , \quad i\in \caln.
\end{equation}
It is clear that the unit vector $[z_i -x]/|z_i-x| \in \cals^2$ contains the orientation of the relative vector $(z_i-x)$. For convenience, we write 
$
Y:=[~y_1~|~\ldots~|~ y_n~]$
and
$ Z:=\col(z_1,\ldots, z_n).
$

%

\ \\
{\bf  Problem 1.} (\emph{visual SLAM observer}) Consider the kinematics \eqref{kinematics} with the output $Y$, and assume that $U$ is measured via IMUs. Design an observer
\begin{equation}
\label{obs:general}
 \begin{aligned}
 	\dot{\hat \eta} = F(\hat \eta,Y,U), \quad 
 	(\hat X,\hat Z) = H(\hat \eta,Y) \\
 \end{aligned}
\end{equation}
with $\hat X\in SE(3)$ and $\hat Z \in \rea^{3n}$, guaranteeing 
\begin{equation}
\label{convergence}
\lim_{t\to +\infty}\big[|\hat X(t) - X(t)| + |\hat Z(t) - Z| \big] = 0.
\end{equation}

%
\section{\rm\textsc{Main Results}}
\label{sec3}
%
In this section, we will design an almost globally convergent observer to achieve \eqref{convergence}, by means of PEBO.

\subsection{Key algebraic identities}
The key step in PEBOs is to generate invariant foliations among the system states and dynamic extension \cite{ORTetalscl}. Here we construct a dynamic extension
\begin{equation}
\label{dyn_ext1}
\dot X_e = X_e U^\vee, \qquad X_e:=\calt(Q,\xi)
\end{equation}
with $X_e\in SE(3)$, which can be regarded as a virtual robot in $\{\calv\}$. We have the following.
\begin{lemma}
\label{lem:1} \rm
Consider the dynamics \eqref{kinematics} and \eqref{dyn_ext1}. The dynamic extension \eqref{dyn_ext1} is forward complete, and there exists a constant matrix $X_c = \calt(Q_c,\xi_c) \in SE(3)$ satisfying  
\begin{equation}
\label{id:1}
X_e(t) \equiv X_c X(t), \quad \forall t\ge 0.
\end{equation}
\end{lemma}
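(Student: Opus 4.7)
The key idea is that the two dynamics \eqref{kinematics} (written as \eqref{dot_X}) and \eqref{dyn_ext1} have exactly the same left-invariant form on $SE(3)$ with the \emph{same} input $U^\vee$, so their relative motion should be constant. This is a standard left-invariance argument on matrix Lie groups.

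For forward completeness, I would first argue that $X_e$ remains in $SE(3)$: the vector field $X_e \mapsto X_e U^\vee$ lies in the tangent space $T_{X_e}SE(3)$ (since $U^\vee \in \mathfrak{se}(3)$ and $SE(3)$ is a matrix Lie group closed under left-multiplication by a Lie algebra element in this tangent sense). Hence the rotational component $Q(t)$ evolves on the compact set $SO(3)$ and is automatically bounded. The translational component satisfies $\dot{\xi} = Q v$, and since $|Q| \le \sqrt{3}$ and $v$ is uniformly bounded by assumption, $\xi(t)$ grows at most linearly and cannot escape in finite time. This rules out finite-escape, giving forward completeness.

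For the algebraic identity, I would define $X_c(t) := X_e(t) X^{-1}(t)$ and differentiate. Using $\frac{d}{dt}(X^{-1}) = -X^{-1}\dot{X} X^{-1} = -U^\vee X^{-1}$ together with $\dot X_e = X_e U^\vee$, the product rule gives
\begin{equation*}
\dot X_c = \dot X_e X^{-1} + X_e \frac{d}{dt}(X^{-1}) = X_e U^\vee X^{-1} - X_e U^\vee X^{-1} = 0,
\end{equation*}
so $X_c$ is constant, which proves \eqref{id:1}. Since $SE(3)$ is a group, $X_c = X_e(0) X^{-1}(0) \in SE(3)$, and writing $X_c = \calt(Q_c,\xi_c)$ recovers the stated form.

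None of this is genuinely hard; the only subtlety I would double-check is the direction of the multiplication (right-multiplication by $U^\vee$ vs.\ left, and whether the resulting $X_c$ appears on the left or right of $X$ in \eqref{id:1}). If the dynamics were instead $\dot X = U^\vee X$, the constant would appear on the right as $X X_c$ rather than $X_c X$. Given the convention in \eqref{dot_X} and \eqref{dyn_ext1}, the left-placement $X_c X$ in \eqref{id:1} is consistent with the computation above, so no sign or ordering issue arises.
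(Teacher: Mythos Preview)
Your argument is correct and matches the paper's approach: the paper's (very brief) proof sketch defines the error $E(X,X_e):=X_eX^{-1}$ (written as $X_eX^\top$ in the sketch, which appears to be a slip since $X\in SE(3)$ is not orthogonal) and shows its time derivative vanishes, exactly as you do. Your additional forward-completeness argument via boundedness of $Q\in SO(3)$ and linear growth of $\xi$ is a welcome elaboration that the paper defers to the extended version.
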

\begin{proof}
Defining an error variable $E(X,X_e):=X_e X^\top$ and calculating its time derivative, we may verify the claim. See \cite{YIcdc} for more details.
%
\end{proof}

The above lemma shows that the open-loop dynamic extension \eqref{dyn_ext1} and the kinematics \eqref{kinematics} admit a linear relationship \eqref{id:1}---more precisely, there is a constant rigid transformation between the frames $\{\cali\}$ and $\{\calv\}$---by means of which we reformulate the state estimation of $X(t)$ into the problem of online \emph{constant} parameter identification of $X_c \in SE(3)$. We define the coordinates of all the landmarks $z_i$ in $\{\calv\}$ as
\begin{equation}
\label{ziv}
z_i^{v} := \xi + QR^\top (z_i -x), \quad \forall i\in \caln.
\end{equation}

A key observation is that the transformation \eqref{id:1} of the ambient from $\{\cali\}$ to $\{\calv\}$ does not change the \emph{relative} transformation from a robot to a landmark.

\begin{lemma}
\label{lem:2}\rm
Consider the dynamics \eqref{kinematics} and \eqref{dyn_ext1}. The landmark coordinates $z_i^{v}(t)$ $(i\in \caln)$ in $\{\calv\}$ are \emph{constant}, verifying
\begin{equation}
\label{const:ldmk}
 z_i^v = \xi_c + Q_cz_i.
\end{equation}
The landmark bearings in $\{\calv\}$, defined as
\begin{equation}
\label{bearing:virtual}
 y_i^v := Q^\top {z_i^v - \xi \over |z_i^v - \xi|}, \quad i \in \caln,
\end{equation}
are \emph{measurable}, {\em i.e.},
$
y_i^v(t) \equiv y_i(t), ~ \forall t\ge 0.
$
\end{lemma}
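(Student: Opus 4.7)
The plan is to reduce both claims to direct algebraic consequences of the identity $X_e(t) \equiv X_c X(t)$ established in Lemma~\ref{lem:1}. Expanding this identity in the block form \eqref{calt} yields
\begin{equation*}
\calt(Q,\xi) = \calt(Q_c,\xi_c)\,\calt(R,x) = \calt(Q_c R,\; Q_c x + \xi_c),
\end{equation*}
so the two component identities $Q = Q_c R$ and $\xi = Q_c x + \xi_c$ will be my workhorses throughout.

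First I would substitute these two identities into the definition \eqref{ziv} of $z_i^v$. The term $Q R^\top$ collapses to $Q_c R R^\top = Q_c$ because $R \in SO(3)$, and the translational part $\xi - Q R^\top x$ telescopes to $\xi_c$. What remains is precisely $z_i^v = \xi_c + Q_c z_i$, which is constant since $Q_c, \xi_c$ are the fixed entries of $X_c$ and $z_i$ is constant by \eqref{dyn:ldmk}. This establishes \eqref{const:ldmk}.

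Next I would verify the bearing identity. Using \eqref{const:ldmk} together with $\xi = Q_c x + \xi_c$, I get $z_i^v - \xi = Q_c(z_i - x)$, and since $Q_c \in SO(3)$ is an isometry, $|z_i^v - \xi| = |z_i - x|$. Plugging into the definition \eqref{bearing:virtual} and using $Q^\top = R^\top Q_c^\top$ gives
\begin{equation*}
y_i^v = R^\top Q_c^\top \frac{Q_c(z_i - x)}{|z_i - x|} = R^\top \frac{z_i - x}{|z_i - x|} = h_i(X,Z) = y_i,
\end{equation*}
which is exactly the measured output in \eqref{output}, and in particular is available online even though $Q_c, \xi_c, z_i$ individually are not.

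There is essentially no obstacle: once Lemma~\ref{lem:1} supplies the block-triangular factorization $(Q,\xi) = (Q_c R, Q_c x + \xi_c)$, both claims are one-line substitutions hinging only on the orthogonality of $Q_c$. The only point worth care is ensuring $|z_i - x| \neq 0$, i.e.\ the robot never coincides with a landmark, which is the standing well-posedness assumption already implicit in the definition of the bearing measurement \eqref{output}.
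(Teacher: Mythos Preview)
Your proof is correct and follows essentially the same route as the paper: both arguments expand the identity $X_e=X_cX$ from Lemma~\ref{lem:1} into $Q=Q_cR$, $\xi=Q_cx+\xi_c$ and substitute into \eqref{ziv} and \eqref{bearing:virtual}. The only cosmetic difference is that for the bearing identity the paper reads $z_i^v-\xi=QR^\top(z_i-x)$ directly from the definition \eqref{ziv}, whereas you first pass through \eqref{const:ldmk} to obtain the equivalent expression $z_i^v-\xi=Q_c(z_i-x)$; since $Q_c=QR^\top$, the two computations are the same.
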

\begin{proof}
Invoking \eqref{id:1} and the definition \eqref{ziv}, we have
$$
\begin{aligned}
z_i^v & = [\xi_c + Q_cx] + [Q_c R] R^\top [z_i -x]
= \xi_c + Q_c z_i,
\end{aligned}
$$
which is clearly constant. About the second claim, we have
$
	y_i^v  = Q^\top{QR^\top(z_i - x) \over |QR^\top(z_i - x)|} 
	 = R^\top {z_i -x \over |z_i - x|} 
	 = y_i,
$
($i\in \caln$), where we have used \eqref{ziv} in the first equation.
\end{proof}

As shown in Lemma \ref{lem:2}, the landmark coordinates $z_i^v$ in $\{\calv\}$ are constant. Despite being quite simple, the above lemmata, show the estimation of time-varying systems state may be translated into online parameter estimation of $X_c$ and $z_i^v$. To the best of our knowledge, such a fact has not been used in SLAM observer design before, which, however, provides the possibility to relax significantly the PE or UCO assumptions in existing algorithms. In the sequel, we will show how to design a globally convergent observer.


\subsection{Landmarks Observer in $\{\calv\}$}

Now we are able to construct linear regressor equations (LREs) of constant $z_i^v$. From \eqref{bearing:virtual} it yields
$$
\begin{aligned}
Q y_i   = {z_i^v - \xi \over |z_i^v - \xi |}
~\implies~
Qy_i [Qy_i]^\top (z_i^v - \xi)
 = z_i^v - \xi.
\end{aligned}
$$
Noting that $y(t)$ is a unit vector, we thus obtain the LREs
\begin{equation}
\label{lre1}
q_i(t) = \Pi_{Q(t)y_i(t)} \cdot z_i^v,
\end{equation}
by defining measurable signals $q_i := \Pi_{Qy_i} \cdot \xi$, where we have used the fact $\bar y_i(t) \equiv y_i(t)$ introduced in Lemma \ref{lem:2}. It is also easy to verify the following equivalence for some $\delta'>0$.
$$
\begin{aligned}
& \quad \Pi_{Qy_i}^{\hal} \in (\delta',T)\mbox{-PE} ~
 \iff  ~
\int_{t}^{t+T} \Pi_{Qy_i} ds \succeq \delta I_3,~ \forall t\ge 0.
\end{aligned}
$$

The widely used PE or UCO-type assumptions in the existing visual SLAM observers, intuitively, require that all landmarks appear in the view filed of the camera persistently, and the robot keeps moving w.r.t. the landmarks, which in general can hardly be guaranteed in practice. In the following, we will show that with the LREs \eqref{lre1} these restrictive excitation assumptions can be relaxed using the recently introduced DREM technique \cite{ARAetaltac}. We first introduce some linear filters to generate a new LRE. To be precise, for each $i\in \caln$ introducing an $\call_\infty$ operator $\calh_i: \call^3_\infty \to \call^3_\infty$ to \eqref{lre1}, and then obtaining the extended (E)LRE 
$$
\calh_i[q_i](t) = 
\Big[
\begin{aligned}
    \calh_i[\Pi_{i,1}](t)~\Big| ~ \calh_i[\Pi_{i,2}](t)~\Big|~\calh_i[\Pi_{i,3}](t)
\end{aligned}
\Big]
z_i^v
$$
where the symbols $\Pi_{i,j}$ ($j=\{1,2,3\}$) are used with a slight abuse of notation, to denote the $j$-th column of the matrix $\Pi_{Q(t)y_i(t)}$. Here, we adopt the LTV operator of the form
\begin{equation}
\label{h:ltv}
	\calh_i(p,t)[\cdot] = {\alpha_i \over p+\alpha_i}[\Pi_{Q(t)y_i(t)}(\cdot)]
\end{equation}
with $\alpha_i >0$ and $p:=d/dt$ the differential operator. One of its state space realization is known as Kreisselmeier's ELRE \cite{KRE}---written as K-ELRE in the paper---which is given by
\begin{equation}
\label{kelre}
\begin{aligned}
	{\dot q}_i^e & = - \alpha_i q_i^e + \alpha_i \Pi_{Qy_i}^\top q_i\\
	\dot{\Phi}_i & = - \alpha_i\Phi_i+ \alpha_i \Pi_{Qy_i}^\top \Pi_{Qy_i}
\end{aligned}
\end{equation}
with the system states $(q_i^e,\Phi_i)\in \rea^3 \times \rea^{3\times 3}_{\succeq 0}$ for $i\in \caln$. It is straightforward to obtain the K-ELRE
\begin{equation}
\label{k-elre2}
 q_i^e(t) = \Phi_i(t)z_i^v + \et.
\end{equation}

Then, we mix the regressors \eqref{k-elre2} to get three decoupled, \emph{scalar} regressors for each $i \in \caln$, that is pre-multipying the adjugate matrix $\adj\{\Phi_i(t)\}$ to the both sides, thus obtaining
\begin{equation}
\label{decp-regr}
   Y_{i,j}(t) = \Delta_i(t) z_{i,j}^v + \et,  \quad i\in \caln, \; j\in\{1,2,3\}
\end{equation}
with the definitions $Y_{i,j} := \adj\{\Phi_i\} q_i^e,~ \Delta_i := \det\{\Phi_i\}$ and 
$
Y_i:=\col(Y_{i,1} , Y_{i,2} , Y_{i,3}).
$
We are in position to present a novel landmark observer in $\{\calv\}$, which only requires a strictly weaker condition than PE.

\begin{proposition} \rm 
\label{prop:2}
({\em Mapping})
The landmark observer
\begin{equation}
\label{ldmk-obs}
\begin{aligned}
	\dot{\chi}_i & =  \Delta_i ( Y_i - \Delta_i \chi_i ), \quad
	\chi_i(0) = \chi_{i,0}
	\\
	\dot{\omega}_i & = - \Delta_i^2\omega_i,\quad \omega_i(0)=1
	\\
	\dot{\hat z}_i^v & =  \gamma_i \Delta_i^e \left[ Y_i + \ki^i\big(\chi_i - \omega_i\chi_{i,0} \big) - 
	\Delta_i^e
	\hat{z}_i^v
	\right]
\end{aligned}
\end{equation}
with \eqref{kelre} and
\begequ
\label{delta_ie}
\Delta_i^e:= \Delta_i + \ki^i(1-\omega_i),
\endequ
$\gamma_i >0$ and $\ki^i>0$ ($\quad i \in \caln$), guarantees
\begin{itemize}
    \item[1)] (Internal stability) All the internal states are bounded.
    \item[2)] (Element-wise monotonicity) For $\forall t_a \ge t_b \ge 0$,
    $$
    |\hat z_{i,j}^v(t_a) - z_{i,j}^v| \le |\hat z_{i,j}^v(t_b) - z_{i,j}^v|,
    \quad j \in \{1,2,3\}.
    $$
    \item[3)] (GES under IE) Assuming that $\Pi_{Qy_i}$ is $(t_{0,i},t_{c,i},\delta_i)$-IE, the origin of the error dynamics of $\tilde z^v_i:= \hat z_i^v - z_i^v$ is globally exponentially stable.
\end{itemize}
\end{proposition}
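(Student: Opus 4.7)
The strategy is to reduce the error dynamics for $\tilde z_i^v := \hat z_i^v - z_i^v$ to a scalar, element-wise decoupled linear time-varying system whose coefficient is ultimately bounded below away from zero under the IE assumption; claims 1--3 then all follow from elementary properties of such systems.

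The first step is to exploit the algebraic structure of \eqref{ldmk-obs} to derive the error dynamics. The K-ELRE \eqref{k-elre2} combined with the adjugate multiplication already gives \eqref{decp-regr}, namely $Y_i(t) = \Delta_i(t)\, z_i^v + \et$. Next, defining $\tilde\chi_i := \chi_i - z_i^v$ and substituting this regressor into $\dot\chi_i$ yields, modulo exponentially decaying filter transients, $\dot{\tilde\chi}_i = -\Delta_i^2\, \tilde\chi_i$, whose generator coincides with that of $\omega_i$. Hence $\tilde\chi_i(t) = \omega_i(t)(\chi_{i,0}-z_i^v) + \et$, equivalently
\begequ
\label{plan:chi}
\chi_i(t) - \omega_i(t)\, \chi_{i,0} = \bigl(1-\omega_i(t)\bigr) z_i^v + \et.
\endequ
Plugging $Y_i = \Delta_i z_i^v + \et$ and \eqref{plan:chi} into the third equation of \eqref{ldmk-obs}, and using \eqref{delta_ie}, the bracket collapses to $\Delta_i^e(z_i^v - \hat z_i^v)$, producing the scalar, element-wise decoupled error dynamics
\begequ
\label{plan:err}
\dot{\tilde z}_{i,j}^v = -\gamma_i\, \bigl(\Delta_i^e(t)\bigr)^2\, \tilde z_{i,j}^v + \et, \quad j \in \{1,2,3\}.
\endequ

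With \eqref{plan:err} in hand, claims 1 and 2 are short. Boundedness of $\Phi_i$ and $q_i^e$ follows from their stable low-pass structure driven by the bounded projectors $\Pi_{Qy_i}$; direct integration of $\dot\omega_i = -\Delta_i^2 \omega_i$ gives $\omega_i(t) \in (0,1]$; $\chi_i$ is bounded from its error equation $\dot{\tilde\chi}_i = -\Delta_i^2 \tilde\chi_i + \et$; and boundedness of $\hat z_i^v$ is immediate from \eqref{plan:err}. For claim 2, element-wise monotonicity follows from the non-positive generator $-\gamma_i(\Delta_i^e)^2 \le 0$ in \eqref{plan:err}; the $\et$ contribution is annihilated by the natural filter initializations $\Phi_i(0)=0$, $q_i^e(0)=0$, under which the auxiliary signal $\psi_i := q_i^e - \Phi_i z_i^v$ satisfies $\dot\psi_i = -\alpha_i \psi_i$ with $\psi_i(0)=0$, hence $\psi_i \equiv 0$. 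This is precisely the role of the augmented signal \eqref{delta_ie}: it promotes the bracket into $\Delta_i^e z_i^v - \Delta_i^e \hat z_i^v$ so the scalar error equation carries a sign-definite coefficient.

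The main difficulty is claim 3, because IE only provides excitation over a finite window, after which $\Pi_{Qy_i}$ may vanish identically. I would first show, via the integral representation of the filter \eqref{kelre}, that the IE assumption implies the explicit bound $\Phi_i(t) \succeq \alpha_i \delta_i e^{-\alpha_i(t-t_{0,i})} I_3$ for all $t \ge t_{0,i}+t_{c,i}$, hence $\Delta_i(t) > 0$ on a non-trivial sub-interval. Because $\omega_i$ is monotonically non-increasing and $\int \Delta_i^2 ds$ grows strictly there, $\omega_i$ must drop strictly below $1$: there exist $T^\ast > 0$ and $\epsilon^\ast > 0$ with $1-\omega_i(t) \ge \epsilon^\ast$ for all $t \ge T^\ast$, and therefore $\Delta_i^e(t) \ge \ki^i \epsilon^\ast$ on that tail. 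Global exponential convergence of $\tilde z_i^v$ follows from the explicit representation $\tilde z_{i,j}^v(t) = \tilde z_{i,j}^v(0)\exp\bigl(-\gamma_i \int_0^t (\Delta_i^e)^2 ds\bigr)$, since the uniform positive lower bound forces the exponent to grow at least linearly in $t$; the exponentially decaying $\et$ perturbation in \eqref{plan:err} does not affect the exponential rate by a standard ISS-type argument.
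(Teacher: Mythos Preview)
Your proposal is correct and follows essentially the same route as the paper: derive the integral regressor \eqref{plan:chi}, collapse the bracket into $\Delta_i^e(z_i^v-\hat z_i^v)$ to obtain the scalar error dynamics \eqref{plan:err}, and then show that IE forces $\omega_i$ to drop strictly below $1$ so that $\Delta_i^e$ is ultimately bounded away from zero. The only notable differences are stylistic: for claim~1 the paper packages boundedness into a single Lyapunov function $V=\tfrac12\sum_i\bigl(|\tilde z_i^v|^2+|\chi_i-z_i^v|^2+|\omega_i|^2\bigr)$ rather than arguing state by state, and for claim~2 you are more careful than the paper in observing that the natural initializations $\Phi_i(0)=0$, $q_i^e(0)=0$ kill the $\et$ term exactly (the paper simply dismisses $\et$ as harmless).
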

\begin{proof}
The time derivative of $(\chi_i - z_i^v)$ is given by
$$
\dot{\aoverbrace[L1R]{ \chi_i -z_i^v }}
=
- \Delta_i^2(t) (\chi_i -z_i^v),
$$
which is an LTV dynamics, with $\Delta_i$ a scalar bounded signal. Its solution is
$$
\begin{aligned}
\chi_i(t) -z_i^v 
&
= \exp\bigg( - \int_{0}^t\Delta_i(s)^2 ds  \bigg)  (\chi_i(0) -z_i^v) 
\\
&
= \omega(t) (\chi_{i,0} -z_i^v) 
\end{aligned}
$$
by noting the solution 
$
\omega_i(t)= \exp(- \int_{0}^t\Delta_i(s)^2 ds ).
$
It yields
\begin{equation}
\label{int_regr}
  \chi_i(t) - \omega_i(t) \chi_{i,0} = [1-\omega_i(t)] z_i^v,
\end{equation}
in which we underscore that $\chi_i,\omega_i$ and $\chi_{i,0}$ ($t\ge 0$) are all available in observer design. In this way, we get new scalar LREs \eqref{int_regr} involving the ``integral'' information of $\Delta_i$. Combining \eqref{int_regr} and \eqref{decp-regr}, we obtain new linear regressors
\begin{equation}
\label{lre_mixing}
Y_i + \ki^i ( \chi_i - \omega_i\chi_{i,0} ) 
=
\Delta_i^e z_i^v,
\end{equation}
in which $z_i^v$ are the unknown constant parameters.

By defining the estimation error $\tilde z^v_i = \hat z_i^v - z_i^v$ we have\footnote{Since the exponentially decaying term $\et$ has no effect on stability, we omit it in the sequel analysis.}
\begequ
\label{dot_ziv}
\dot{\tilde z}_i^v = - \gamma_i (\Delta_i^e)^2 \tilde{z}_i^v .
\endequ
To verify the first claim, we choose the Lyapunov function 
$
V = \hal\sum_{i=1}^n \left(
|\tilde{z}_i^v|^2 + |\chi_i - z_i^v|^2 + |\omega_i|^2
\right),
$
satisfying
$$
\begin{aligned}
\dot V & = -
\sum_{i=1}^n  \left[\gamma_i (\Delta_i^e)^2 |\tilde z_i^v|^2 + \Delta_i^2 |\chi_i - z_i^v|^2 + \Delta_i^2 \omega_i^2
\right]
 \le 0.
\end{aligned}
$$
Thus, the system is internally stable.

The second claim can be easily verified invoking $\Delta_i^e$ are scalar signals. Recalling the dynamics of $\tilde z_i^v$, the last claim is equivalent to verify $\Delta_i^e \in \mbox{PE}$. To simplify presentation, we neglect the index $i$ of the IE condition from the $i$-th landmark. From the IE assumption of $\Pi_{Qy_i}$, there exist $t_0,t_c,\delta \in \rea_{+}$ such that
$
\int_{t_0}^{t_0+t_c} \Pi_{Q(s)y_i(s)}\Pi_{Q(s)y_i(s)}^\top ds
\ge \delta I_3,
$
since $\Pi_{Qy_i}$ is symmetric. Noting \eqref{kelre} in the K-ELRE, we can verify
$$
\begin{aligned}
 \Phi_i(t_0+t_c)  
 \succeq  &\int_{t_0}^{t_0+t_c} e^{-\alpha_i(t_0+t_c-s)}\Pi_{Q(s)y_i(s)}^\top\Pi_{Q(s)y_i(s)} ds
\\
 \succeq & e^{-\alpha_i t_c} \int_{t_0}^{t_0+t_c}  \Pi_{Q(s)y_i(s)}^\top\Pi_{Q(s)y_i(s)} ds
\\
 \succeq &\delta e^{-\alpha_i t_c}  I_3.
\end{aligned}
$$
It implies 
$
\Delta(t_0+t_c) \ge \delta_0 := (\delta e^{-\alpha_i t_c})^3.
$
From the continuity of differential equations, there exist small $\tau>0$ and $\varepsilon >0$ such that 
$$
\begin{aligned}\footnotesize 
 \int_{t_0+t_c-\tau}^{t_0+t_c} \Delta_i(s) ds  \ge \sqrt \varepsilon
\implies & 
\int_{0}^{t} \Delta_i^2(s) ds \ge{1 \over \tau}\varepsilon, ~\forall t\ge t_c
\\
 \implies &  1- \omega_i \ge 1-e^{ - {1\over \tau}\varepsilon}, ~\forall  t\ge t_c
\\
\implies &  \Delta_i^e= \Delta_i + \ki^i(1-\omega_i) \in \mbox{PE}
\end{aligned}
$$
where we used $\Delta_i \ge 0$, $\forall t\ge 0$ and the Cauchy-Schwarz inequality for integrals. 
\end{proof}

Some remarks are made below about the proposed design.
\begin{itemize}
    \item[\bf R1] The success of the proposed landmark observer relies on the new scalar LREs \eqref{lre_mixing}---motivated by \cite{ORTetalaut}---which satisfy the PE condition. It combines two parts, namely, the first contains information in the current small ``interval'' invoking the K-ELRE generated by the filter \eqref{kelre}, and the second one consists of historical data using an integral operation to generate the ``state transmission function'' $\omega(t)$. Here, the parameters $\ki^i$ are adopted to play a role of weighting between them. 
    $$
    \begin{aligned}\small
    \underbrace{Y_i}_{\mbox{\footnotesize``current interval''}} + ~\underbrace{ \ki^i [\chi_i - \omega_i\chi_{i,0} ]}_{\footnotesize\mbox{historical information}} 
=
\Delta_i^e(t)
z_i^v.
\end{aligned}
    $$
	\item[\bf R2] The obtained new LREs \eqref{int_regr} may be replaced by other \emph{pure integral} action, {\em e.g.},
	$
	\int_0^t Y_{i}(s) ds = \int_{0}^t \Delta_i(s)ds \cdot z_i^v,
	$
	in order to make full use of the IE condition. It, however, may cause parts of internal states in the observer unbounded as time goes to infinity, once $\Delta_i$ satisfies the PE condition.
%
    \qed
\end{itemize}

\subsection{Landmark and Pose Observer in $\{\cali\}$}

After obtaining the landmarks estimation in $\{\calv\}$, we then need to express it in $\{\cali\}$, as well as to estimate the rigid body pose $X = \calt(R,x)$. It is widely recognized that the full dynamics \eqref{kinematics}-\eqref{dyn:ldmk}, under the output functions \eqref{bearing:virtual}, is not strongly differentially observable \cite{LEEetal}---a notion widely explored in high-gain observers \cite{BES}. The underlying reason is clear that it is ambiguous to identify the origin of $\{\cali\}$ with only body-fixed bearing measurement, and angular/translational velocities, see for example \cite{ZLOFOR,VANetal} in which the estimation error converges to a \emph{quotient manifold} rather than an isolated equilibrium. There are generally two technical routes to circumvent such difficulties:
\begin{itemize}
    \item[\bf T1] Assuming the initial conditions of pose states $X(0)=\calt(R(0),x(0))$ in the inertial frame $\{\cali\}$.
    \item[\bf T2] Incorporating the measurements in $\{\calb\}$ of (at least) two vectors, the inertial coordinates of which are \emph{known} in advance. 
\end{itemize}

In the latter two known vectors provide information to transform the remainder of the task into a rigid body pose estimation problem, which is widely studied in the control literature \cite{MAHetal,BERetal,MOENAM}. However, the former coincides with the ``standard'' definition of SLAM problems. With this consideration, we will pursue the first route in the sequel. From now on, we assume the initial pose 
$
X(0) = X_\star:=\calt(R_\star, x_\star ) \in SE(3)
$
is \emph{pre-selected}, thus known, in order to ``fix'' the inertial frame $\{\cali\}$. It is clear that for \eqref{dyn_ext1} by choosing particular initial conditions we have 
$$\small
  \Big[\xi(0) = x_\star, ~ Q(0)= R_\star\Big] 
  ~\implies ~
  \Big[ X(t)  = X_e(t), ~ \forall t\ge0\Big]
$$
under \emph{ideal circumstance}. This, however, is not practically applicable, since the open-loop integral \eqref{dyn_ext1} may be problematic yielding \emph{error accumulation}. In this subsection, we introduce an approach to design a pose observer and robustify the integral operation \emph{vis-\`a-vis} measurement noise. Before continuing our design, we define the vectors 
$
r_i := z_{i+1} - z_{i}, ~ \forall i\in \caln\backslash\{n\},
$
and make the following assumptions.

\begin{assumption}
\label{ass:3ldmk}\rm
The origin of $\{\calb\}$ never coincides with any landmarks for $i\in \caln$. There are at least three landmarks $z_i$ such that 
$
r_i \times r_j \neq 0,~ \mbox{for~} i\ne j.
$
These landmarks, without loss of generality, are numbered in the first $n_\ell \ge 3$.
\end{assumption}

\begin{assumption}
\label{ass:ie}\rm 
The locomotion of the robot guarantees that $\Pi_{Q(t)y_i(t)}$ is $(t_{0,i},t_{c,i},\delta_i)$-IE ($i\in \caln$) along the trajectory $X(t)$ for all the landmarks to be mapped. Additionally, we assume a moment $T_\star \ge \max_{i\in \{1,\ldots, n_\ell\}}\{t_{c,i}\}>0$ is known.
\end{assumption}

\begin{proposition}
\label{prop:pose-observer}\rm 
({\em Localization})
Consider the kinematics \eqref{kinematics} under Assumptions \ref{ass:3ldmk}-\ref{ass:ie}. The pose observer
\begin{equation}
\label{observer:pose}\small
\left\{
\begin{aligned}\small
\dot{\bar z}_j & = \rho_j \bar \phi_j \big(\bar y_j - \bar\phi_j^\top \bar z_j \big)
\\
\dot{\hat Q}_c & =  - ( w_{\tt vis} )_\times \hat Q_c \\
\dot{\hat x} & = \hat Rv  + \sum_{j=1}^{n_\ell} \sigma_j \big(\bar z_j - \hat x - \hat Q_c^\top(\hat{z}_j^v - \xi) \big),
\end{aligned}
\right.
\end{equation}
with parameters $\rho_j, k_j, \sigma_j >0$ $(j \in \{1,\ldots, n_\ell \})$, variables $\bar r_j  = \bar z_{j+1} - \bar z_j, ~ \hat R  = \hat{Q}_c^\top Q$ and  
$
w_{\tt vis}  = \sum_{j=1}^{n_\ell -1} k_j  \hat{r}_j^v \times ( \hat Q_c \bar r_j ),
$
$$\small
\begin{aligned}
\begmat{\dot{\bar y}_j \\ \dot{\bar \phi}_j} & = \left\{
\begin{aligned}
&
\begmat{ \Pi_{Q y_j} (\xi - \xi(0) + Q(0)R_\star^\top x_\star)\\
\Pi_{Q y_j} Q(0) R_\star
},
& t \in [0, T_\star]
\\
&0, & t> T_\star
\end{aligned}
\right.
\end{aligned}
$$
%
and the landmark observer consisting of \eqref{ldmk-obs} and
\begequ
\label{ldmk-obs-alg}
\hat z_i = \hat Q_c^\top (\hat z_i^v - \xi) - \hat x, \quad i \in \caln
\endequ
achieve the task \eqref{convergence} almost globally.
\end{proposition}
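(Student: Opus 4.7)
The plan is to exploit Proposition~\ref{prop:2} and to analyse the remaining three sub-observers in cascade, starting with the filter producing $\bar z_j$. Lemma~\ref{lem:2} yields the pointwise identity $\Pi_{Q(t)y_j(t)}(z_j^v-\xi(t))=0$, while $X(0)=X_\star$ combined with \eqref{id:1} fixes $\xi(0)=Q_c x_\star+\xi_c$ and $Q(0)R_\star^\top=Q_c$. Substituting these into the generators of $(\bar y_j,\bar\phi_j)$ collapses them (after a minor transpose-consolidation of the $R_\star$ factors in the statement) to
\[
\dot{\bar y}_j=\Pi_{Qy_j}Q_c z_j,\qquad \dot{\bar\phi}_j=\Pi_{Qy_j}Q_c,
\]
so on $[0,T_\star]$ we obtain the linear regression $\bar y_j(t)=\bar\phi_j(t)z_j$. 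For $t>T_\star$ both signals are frozen, and Assumption~\ref{ass:ie} with $T_\star\ge t_{c,j}$ makes $\bar\phi_j(T_\star)$ full rank; the update for $\bar z_j$ is therefore a stable LTI system, and $\bar z_j\to z_j$ exponentially.

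Next, I would set $\tilde Q_c:=\hat Q_c Q_c^\top$. By \eqref{const:ldmk} we have $r_j^v=Q_c r_j$, so substituting the limits $\hat z_j^v\to z_j^v$ from Proposition~\ref{prop:2} and $\bar z_j\to z_j$ from the previous step into $w_{\tt vis}$ produces the classical complementary-filter innovation
\[
w_{\tt vis}=\sum_{j=1}^{n_\ell-1}k_j(Q_c r_j)\times(\tilde Q_c Q_c r_j)+\et.
\]
With Lyapunov function $V_R=\tfrac12\sum_j k_j|Q_c r_j-\hat Q_c r_j|^2$, Assumption~\ref{ass:3ldmk} implies that $\{Q_c r_j\}_{j=1}^{n_\ell-1}$ span $\rea^3$, so $\tilde Q_c=I$ is the unique minimum on $SO(3)$ and the three remaining critical points (rotations by $\pi$ about the principal axes of $\sum_j k_j (Q_c r_j)(Q_c r_j)^\top$) are unstable and measure-zero. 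A Mahony-type Lyapunov/LaSalle argument yields almost-global asymptotic stability of $\tilde Q_c=I$ for the unperturbed dynamics; local exponential stability together with an ISS/robustness argument preserves this when $\et$ is restored, so $\hat Q_c\to Q_c$ almost globally and $\hat R=\hat Q_c^\top Q\to R$.

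From $z_j^v=\xi_c+Q_c z_j$ and $\xi_c=\xi-Q_c x$ one gets $z_j=Q_c^\top(z_j^v-\xi)+x$, so the innovation in $\dot{\hat x}$ telescopes (after the previous limits) to $\sum_j\sigma_j(x-\hat x)+\et$. The error $\tilde x:=\hat x-x$ then satisfies $\dot{\tilde x}=-(\sum_j\sigma_j)\tilde x+\et$, delivering $\hat x\to x$. Substituting the converging estimates into \eqref{ldmk-obs-alg} (with the sign of $\hat x$ taken consistently with the identity $z_i=Q_c^\top(z_i^v-\xi)+x$) yields $\hat z_i\to z_i$, closing \eqref{convergence}.

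The hardest step will be the attitude one: the non-contractibility of $SO(3)$ forces almost-global rather than global convergence, and one must show that the measure-zero stable manifold of the three antipodal equilibria is not enlarged by the perturbation $\et$---a perturbation that itself only decays exponentially after the finite-time transient of the $\bar z_j$ filter. A clean route is to invoke a robustness-of-almost-global-asymptotic-stability result on compact manifolds, or equivalently a transversality argument showing that trajectories starting outside the stable manifold of the bad equilibria remain so under the vanishing perturbation.
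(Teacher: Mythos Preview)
The paper does not actually contain a proof of this proposition; it defers entirely to the extended arXiv version \cite{YIcdc}, so a line-by-line comparison is not possible. That said, your cascade decomposition---first the linear regression $\bar y_j=\bar\phi_j z_j$ and exponential convergence $\bar z_j\to z_j$, then the Mahony-type complementary-filter analysis for $\hat Q_c$, then the linear perturbed error for $\hat x$, and finally \eqref{ldmk-obs-alg}---is exactly the architecture the block diagram in Fig.~\ref{fig:schematic} advertises and the natural way to layer the result on top of Proposition~\ref{prop:2}. The algebraic reductions you carry out ($Q_c=Q(0)R_\star^\top$, $\xi_c=\xi(0)-Q_c x_\star$, $\Pi_{Qy_j}(\xi-\xi_c)=\Pi_{Qy_j}Q_c z_j$, and $Q_c^\top(z_j^v-\xi)=z_j-x$) are all correct, as is your diagnosis of the transpose on $R_\star$ and the sign on $\hat x$ in \eqref{ldmk-obs-alg} as typos.

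Two minor points. First, Assumption~\ref{ass:3ldmk} with $n_\ell\ge 3$ guarantees only that at least two of the $r_j$ are non-collinear, not that $\{Q_c r_j\}$ spans $\rea^3$; this is still sufficient for almost-global convergence of the complementary filter on $SO(3)$, so your conclusion is unaffected. Second, the step you flag as hardest---showing that the exponentially vanishing perturbation $\et$ feeding into the $\hat Q_c$ dynamics does not enlarge the measure-zero stable manifolds of the three antipodal equilibria---is indeed the only place where real care is needed, and the robustness-of-aGAS route you sketch is the standard way to close it.
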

\begin{proof}
The proof can be found in \cite{YIcdc}.
\end{proof}
%

To illustrate the proposed design, we give the schematic block diagram in Fig. \ref{fig:schematic}.
\begin{figure}
    \centering
    \includegraphics[width=0.42\textwidth]{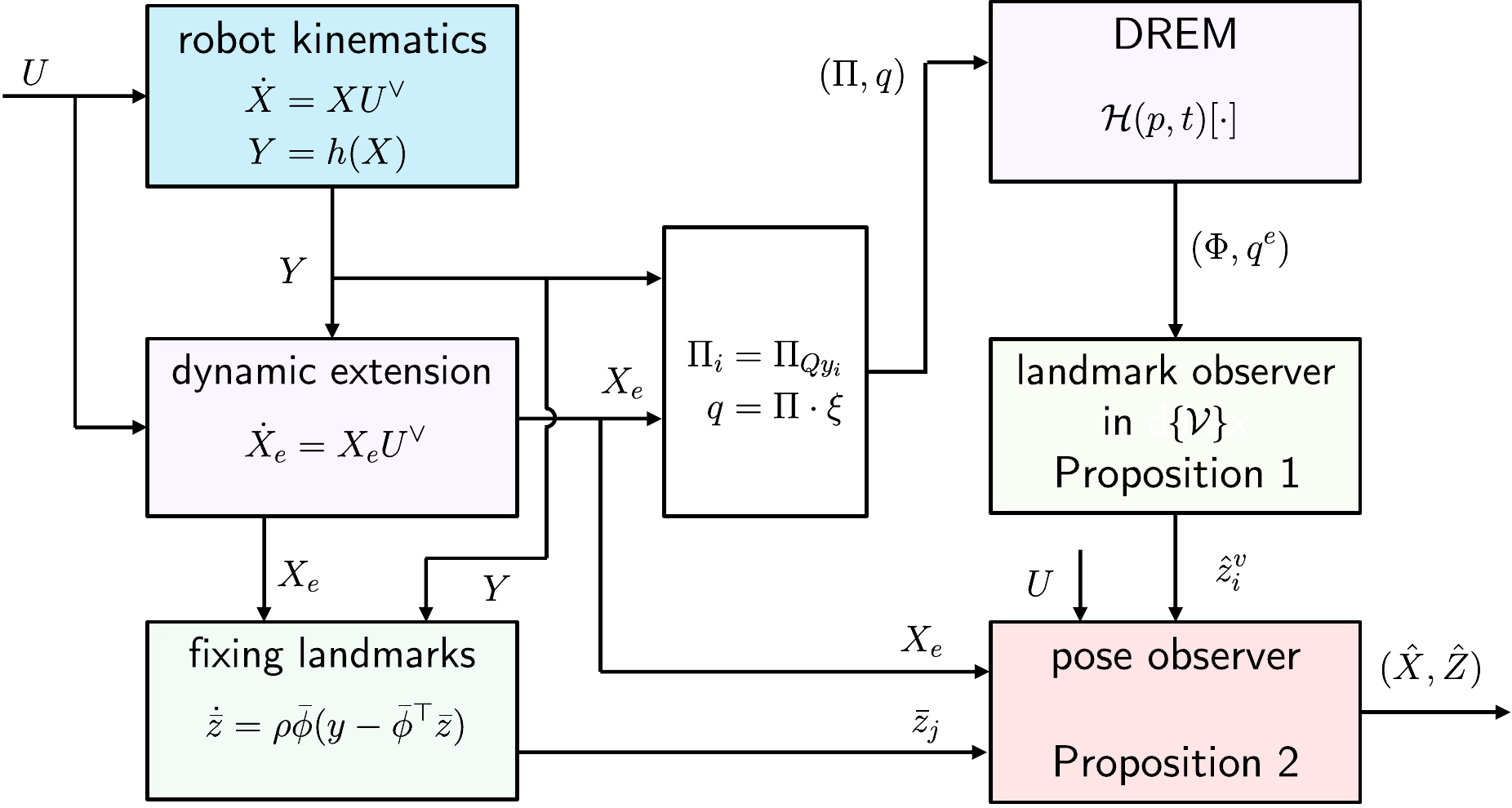}
    \caption{Block diagram of the proposed visual SLAM observer}
    \label{fig:schematic}
\end{figure}

\section{\rm\textsc{Simulations}}
\label{sec4}
 A robot is simulated from $x(0)=[1,1,2]^\top$ and the attitude $R(0) = [\cos({\pi\over 6}) , -\sin({\pi \over 6}) , 0 ; \sin({\pi \over 6}) ,\cos({\pi \over 6}) , 0 ; 0 , 0 ,1]$, and we assume that the robot stopped at $12$s with 
$$\footnotesize
v=
\left\{
\begin{aligned}
{}	&[1,0,0]^\top & t\in [0,12]\\
	&0_{3\times1}, &  t\ge 12
\end{aligned}
\right.
,
\quad
\Omega = \left\{
\begin{aligned}
{} & [0,0,-0.4]^\top, &t \in [0,12]\\
& 0_{3\times 1}, & t\ge 12 	
\end{aligned} \right.
.
$$
It guarantees that all the landmarks satisfy the IE condition with $t_0=0$ and $t_c=12$s. We consider six landmarks and $n_\ell = 3$. The initial conditions in the observer are set as
$$\small
Q(0) = \begmat{\cos({\pi\over 2}) & -\sin({\pi \over 2}) & 0\\ \sin({\pi \over 2}) & \cos({\pi \over 2}) & 0 \\ 0 & 0 & 1},
~
\hat Q_c(0) = I_3,
~
\xi(0) = \begmat{0 \\ 1 \\1 },
$$
$\hat x(0) = 0_{3\times 1}$, and $ q^e_i(0) = 0_{3\times 1}, ~ \Phi_i(0) = 0_{3\times 3}$. The observer gains are selected as $\alpha_i = 5$, $\gamma _i =100$, $k_{\tt I}^i = 20$ for $i\in \caln$, and $\rho_j = 1$ for $j=1,2,3$. Simulation was done in Matlab/Simulink, with noise added to measured velocities and bearings, generated by the block ``Uniform Random Number''. The pose estimation has a very good performance in Fig. \ref{fig:pose_no_pe} when the trajectory does not guarantee the PE condition for landmarks. The landmark observer in Proposition \ref{prop:2} can be used independently for mapping. Here, we compare it to the landmark observer in \cite{LOUetal}, which requires the system being UCO with sufficient excitation. The observer in \cite{LOUetal} studies the case that the landmark coordinates are expressed in $\{\calb\}$, {\em i.e.},
$
z_i^{\calb} = R^\top \left( z_i  - x \right) , ~ i\in \caln,
$
which are, indeed, time-varying, the simulation results of which are shown in Fig. \ref{fig:c5}. In order to make a fair comparison, we plot the evolution of the \emph{norms} of the observation errors of $\tilde z_i$ of the proposed design and $\tilde{z}^{\calb}_i$ for the one in \cite{LOUetal}, since rotation does not affect norms. The proposed landmark observer guarantees the estimation converging to a relatively small neighbourhood of their true values in the absence of the PE condition, and the small ultimate error is caused measurement noise, showing good robustness. In Fig. \ref{fig:pose_no_pe}, we observe that the estimates from the observer in \cite{LOUetal} stop converging at the moment $t_0+t_c=12$s with large errors. It is interesting to note that the estimates diverge from that moment due to the accumulation of noise, which is conspicuous by its absence in our proposed design.

\begin{figure}
   \centering
   \subfigure[$\hat{z}_i^v$ in $\{\calv\}$]{
   \includegraphics[width=0.21\textwidth]{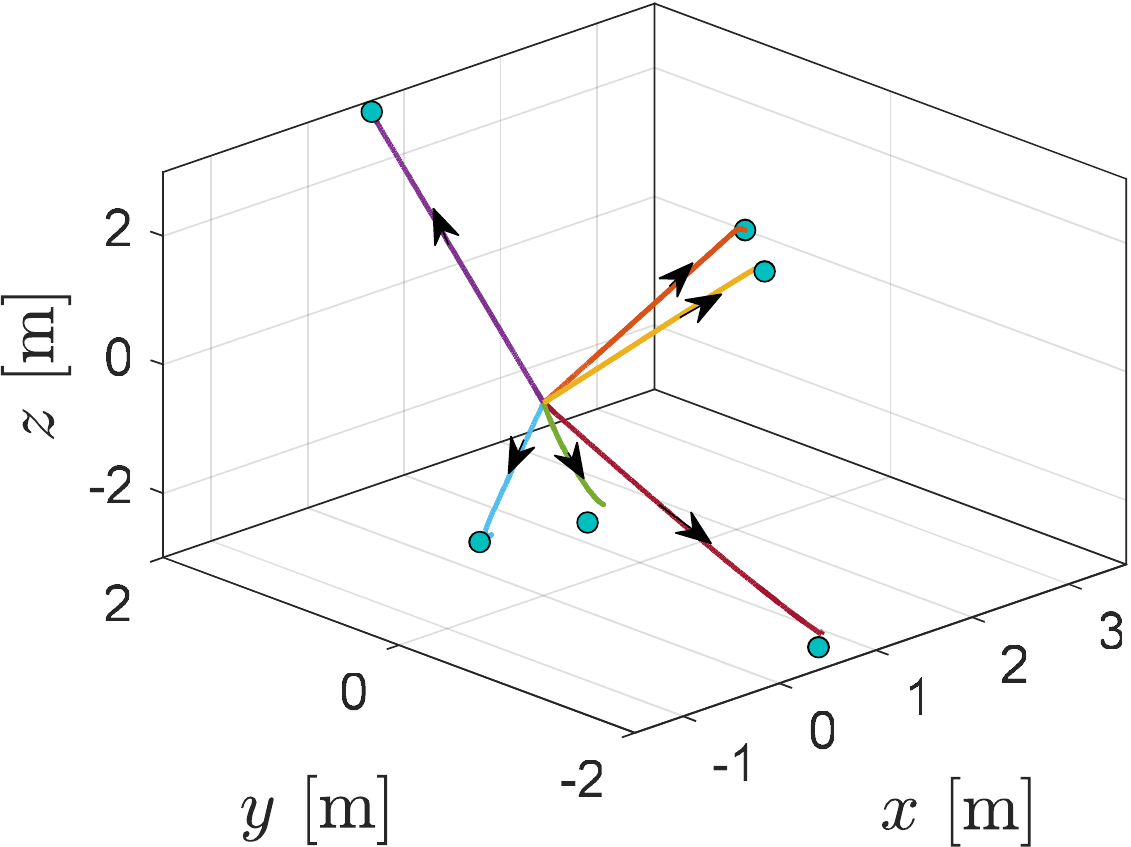}
   \label{fig:c1}
   }
   \hspace{-0.5cm}
   \subfigure[$\hat{z}_i$ in $\{\cali\}$]{
   \includegraphics[width=0.21\textwidth]{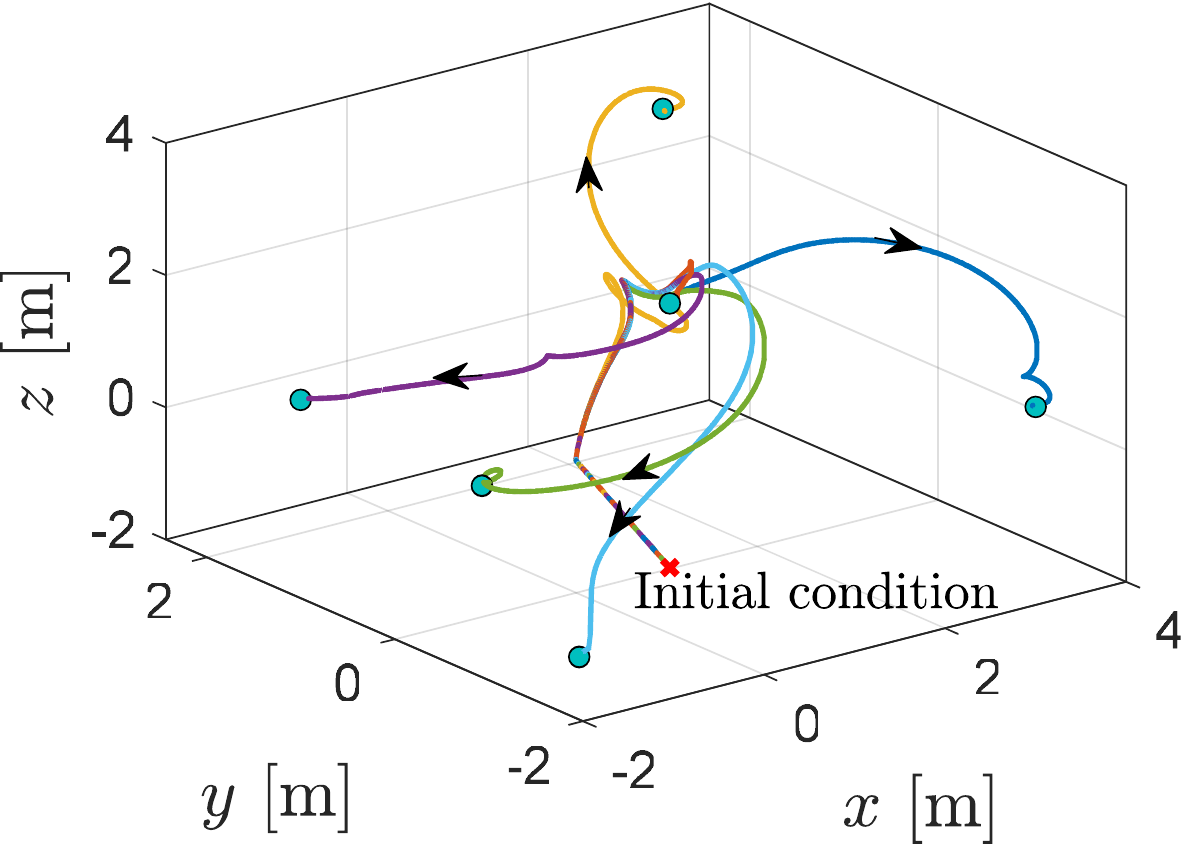}
   \label{fig:c2}
   }
    \caption{Landmark estimates $\hat{z}_i^v$ in $\{\calv\}$ and $\hat{z}_i$ in $\{\cali\}$ without PE }
    \label{fig:simulation-ldmk}
\end{figure}

\begin{figure}
   \centering
    \subfigure[the attitude $R(t)$]{
   \includegraphics[width=0.18\textwidth]{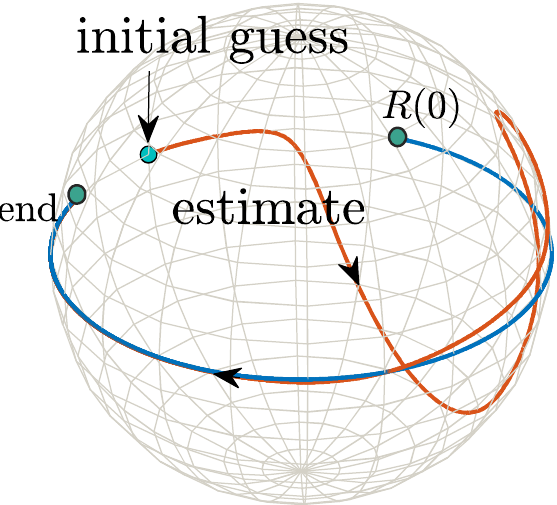}
   \label{fig:c4}
   }
   \subfigure[the position $x(t)$]{
   \includegraphics[width=0.22\textwidth]{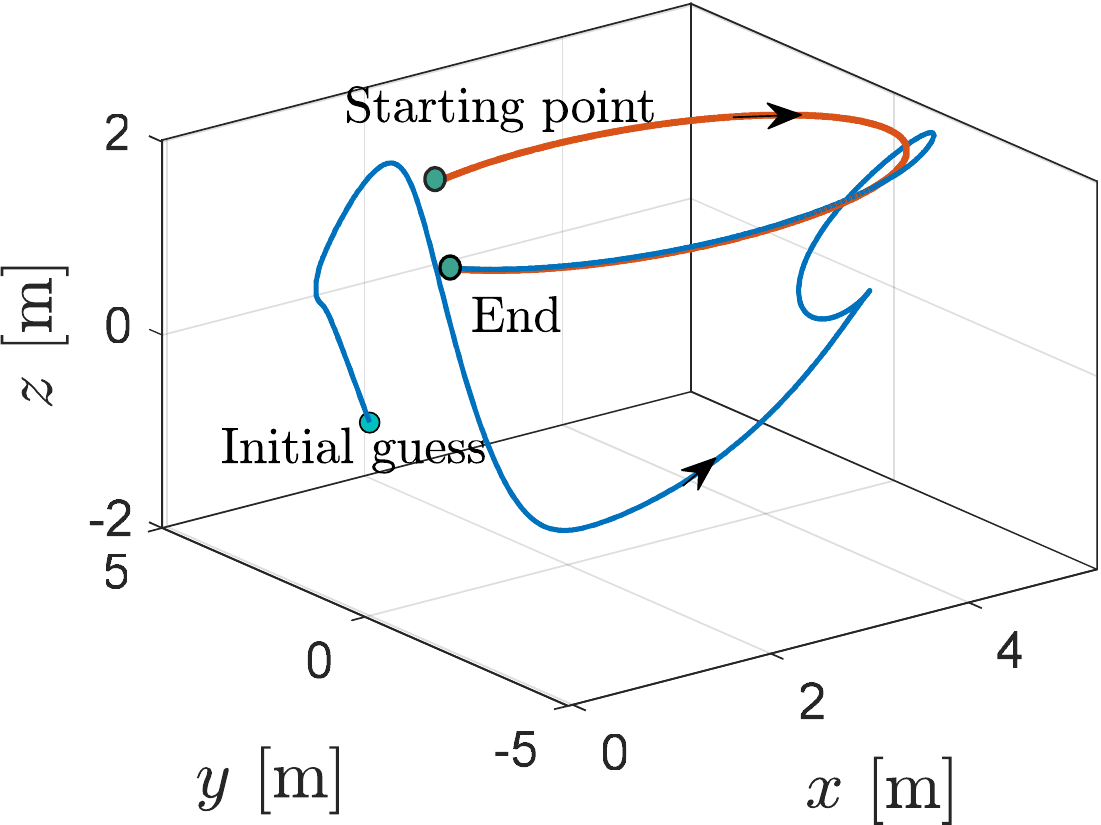}
   \label{fig:c3}
   }
    \caption{Pose $X(t)$ and its estimate $\hat X(t)$ without the PE condition}
    \label{fig:pose_no_pe}
\end{figure}


\begin{figure}[h]
   \centering
    \subfigure[Landmark paths $z_i^{\mathcal{B}}$ and their estimation paths $\hat{z}_i^{\mathcal{B}}$ in $\{\calb\}$]{
   \includegraphics[width=0.205\textwidth]{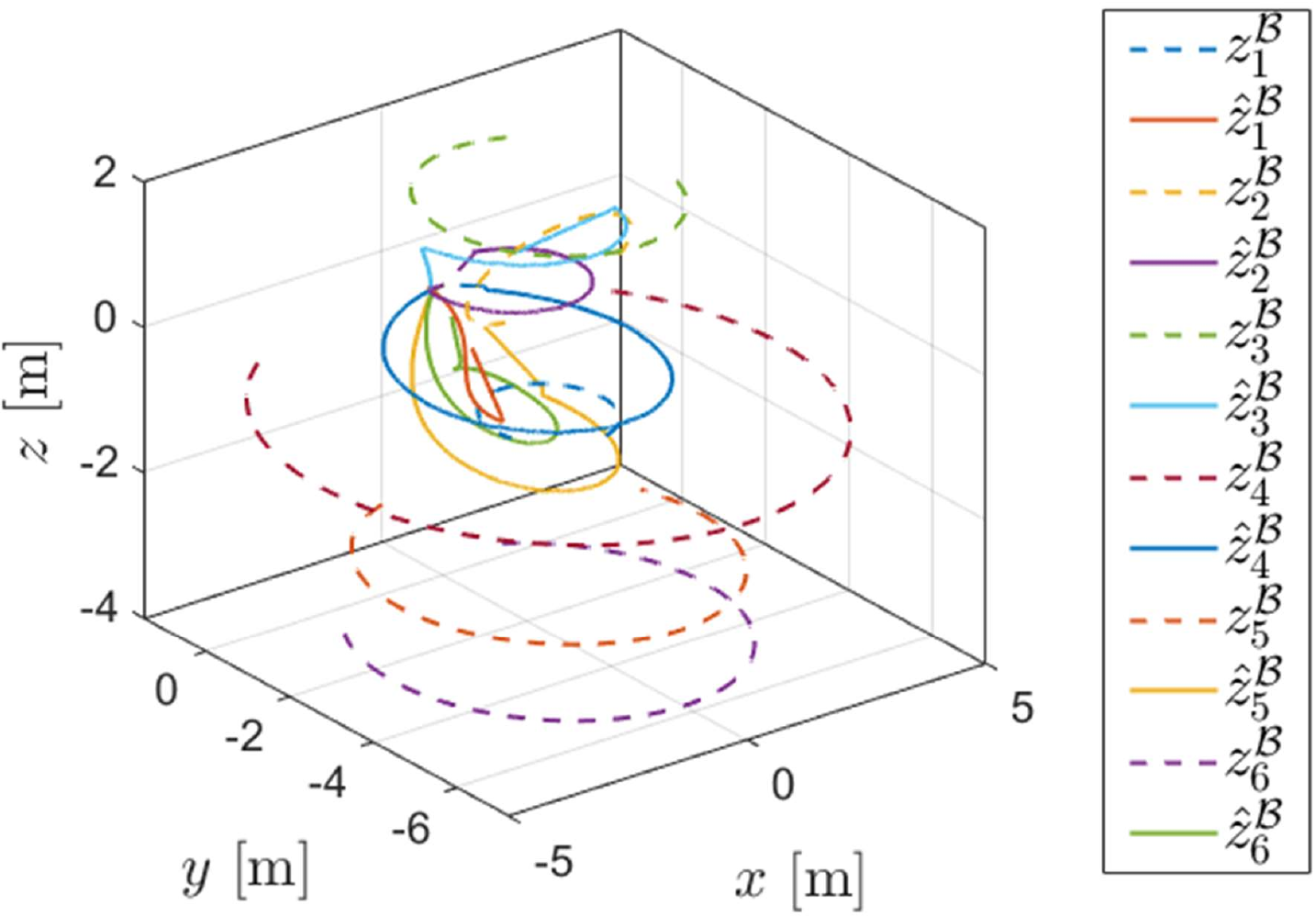}
   \label{fig:c5}
   }
   \subfigure[The landmark estimates $\hat z_1$ of the first landmark in the body $\{\calb\}$]{
   \includegraphics[width=0.205\textwidth]{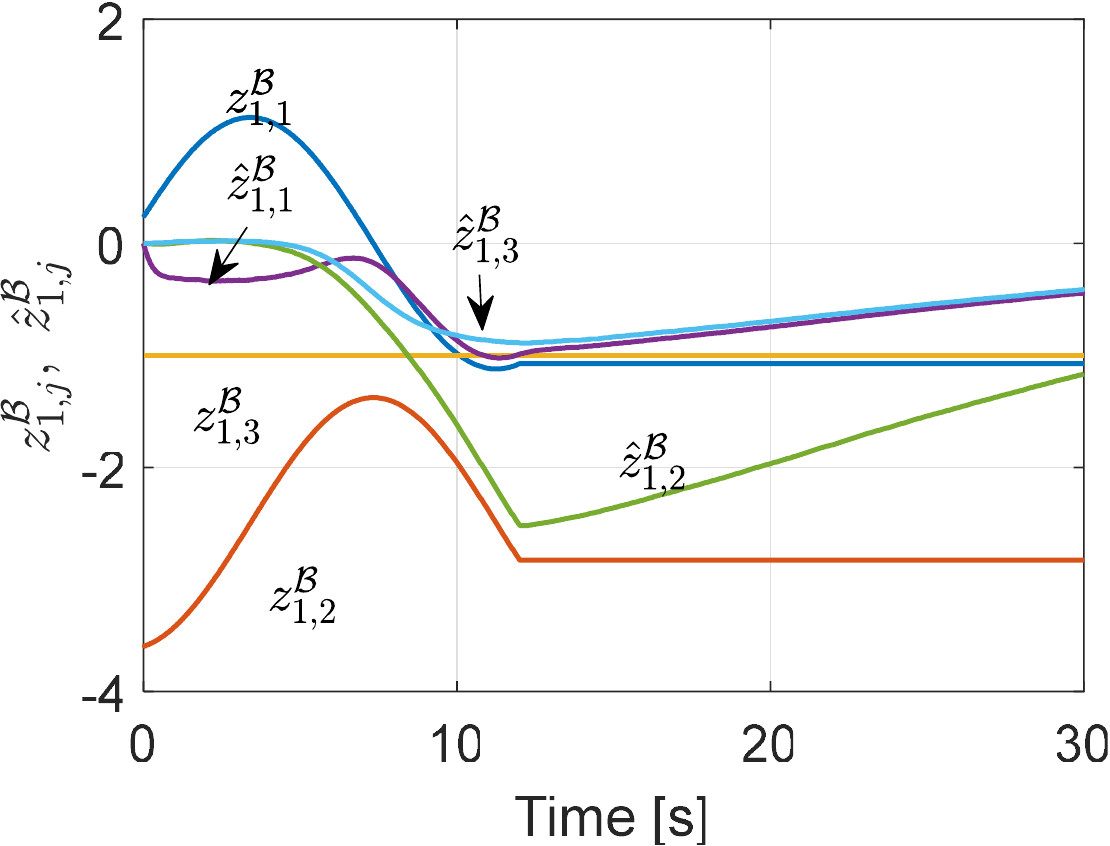}
   \label{fig:c6}
   }
   \subfigure[$|\tilde z_i^{v}|$ from the proposed landmark observer in Prop. \ref{prop:2}]{
   \includegraphics[width=0.205\textwidth]{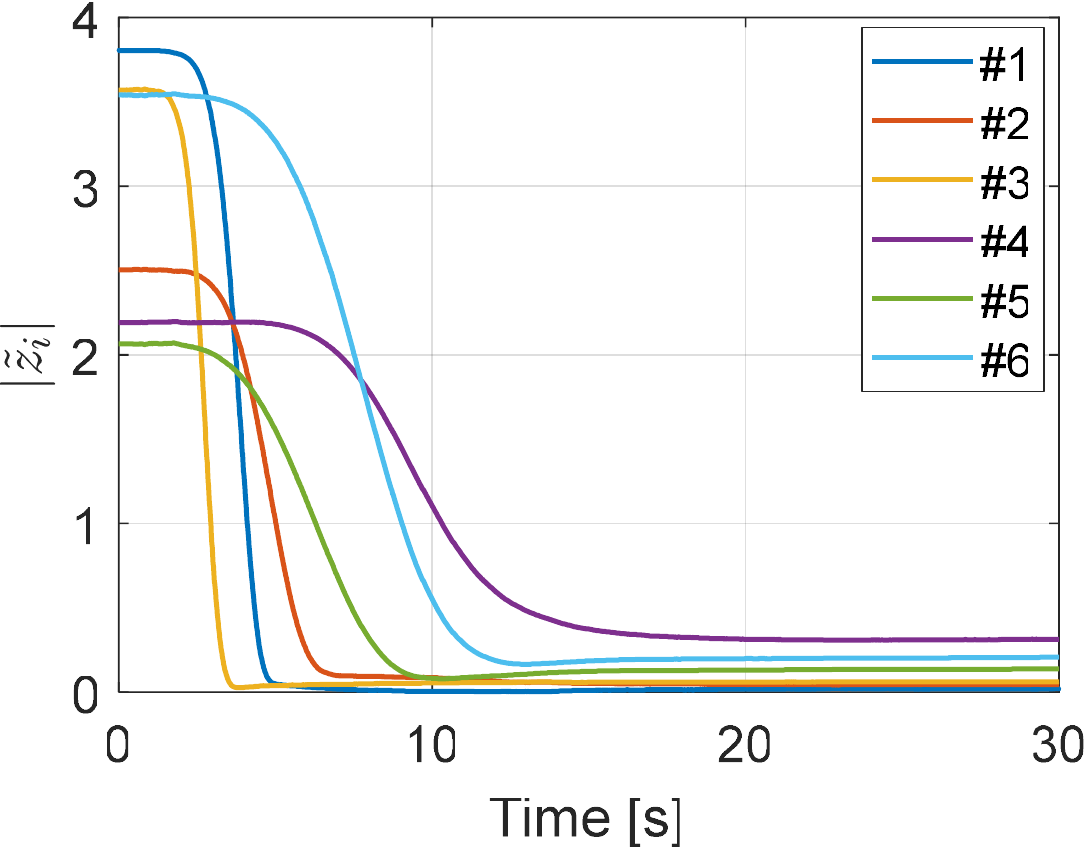}
   \label{fig:c7}
   }
    \subfigure[$|\tilde z_i^{\calb}|$ from the LTV Kalman filter in \cite{LOUetal}]{
   \includegraphics[width=0.205\textwidth]{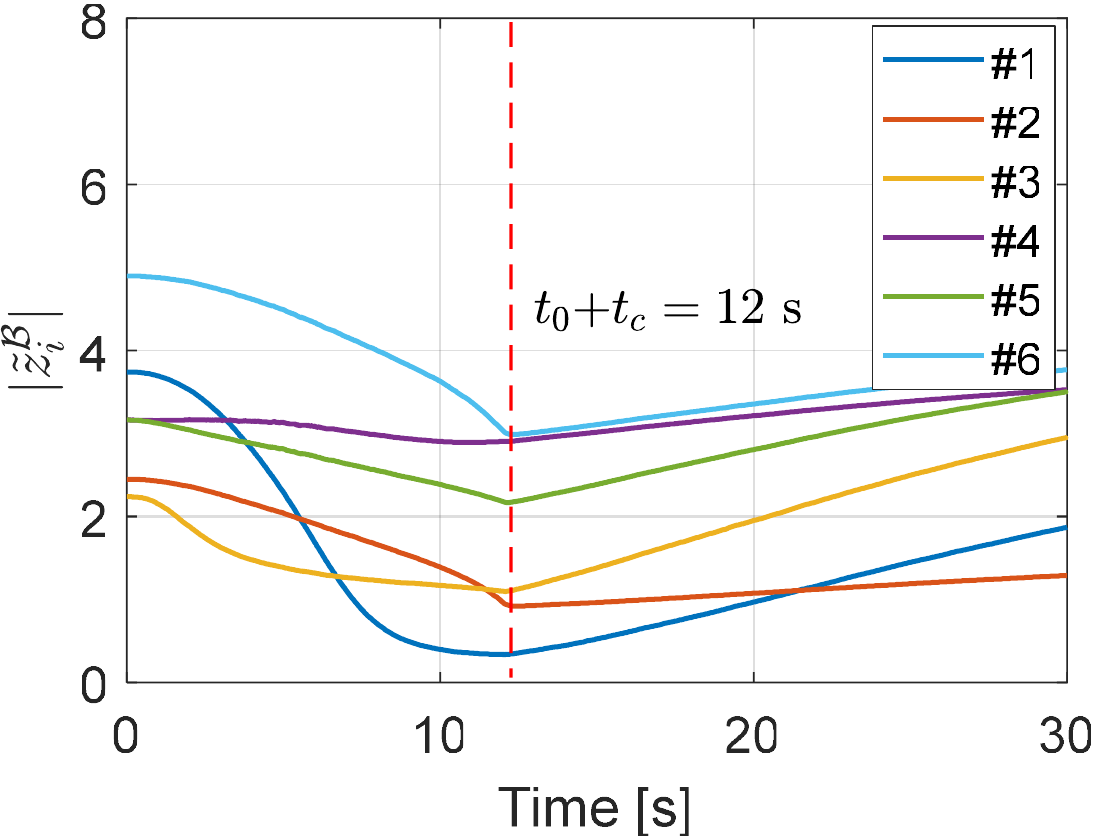}
   \label{fig:c8}
   }
   \caption{Comparison between the proposed design and  \cite{LOUetal}}
    \label{fig:comparison}
\end{figure}

\section{\rm\textsc{Conclusion}}
\label{sect5}
This paper introduces a novel visual SLAM observer design method. A key observation is that the landmarks in the frame of dynamic extension are constant, based on which we are able to get a set of linear regressors, and then transform the problem into online parameter estimation. We extend the PEBO methodology to the manifold $SE(3)\times \rea^{3n}$, the unknown ``parameters'' in our context being the landmark coordinates $z_i^v$ in $\{\calv\}$ together with the constant relative rigid transformation $X_c$. A simple constructive design is provided, with guaranteed almost global convergence, while significantly relaxing the strong PE or UCO-type conditions required in the existing literature. As future work, it is of practical interests to study the performance limitation from gyro noise and bias, as well as the approach to robustify the proposed observer design.

\section{Acknowledgement}
Partial work has been done when the second author was with I3S-CNRS, France, and he appreciates the fruitful discussions with Prof. Tarek Hamel.


\end{document}